\DeclareMathOperator*{\argmax}{arg\,max}
\newtheorem{theorem}{Theorem}
\newtheorem{prop}{Proposition}
\newtheorem{lemma}{Lemma}
\title{A Consistent Diffusion-Based Algorithm for Semi-Supervised Classification on Graphs}
\author{
    Nathan de Lara \hspace{1cm} Thomas Bonald\\
    Télécom Paris, Institut Polytechnique de Paris   \\ 
    {\tt \{nathan.delara, thomas.bonald\}@telecom-paris.fr}
}
\begin{document}

\maketitle

\begin{abstract}
Semi-supervised classification on graphs aims at assigning labels to all nodes of a graph based on the  labels known for a few nodes, called the seeds. The most popular algorithm relies on the principle of heat diffusion, where the labels of the seeds are spread by thermo-conductance and  the temperature of each node is used as a score function for each label. Using a simple block model, we prove that this algorithm is not consistent unless the temperatures of the nodes are centered before classification. We show that this simple modification of the algorithm is enough to get significant performance gains on real data.
\end{abstract}


\section{Introduction}

Heat diffusion, describing by the evolution of temperature $T$ in an isotropic material, is governed by the heat equation:
\begin{equation}\label{eq:heat}
\dfrac{\partial T}{\partial t} = \alpha \Delta T,
\end{equation}
where $\alpha$  is the thermal conductivity of the material and $\Delta$ is the Laplace operator. 
In  steady state, this equation simplifies to $\Delta T = 0$ and the function $T$ is said to be {\it harmonic}. The Dirichlet problem consists in finding the equilibrium in the presence of  boundary conditions, that is when the temperature $T$ is fixed on the boundary of the region of interest.

The principle of heat diffusion has proved instrumental in graph mining, using a discrete version of the heat equation \eqref{eq:heat} \citep{kondor2002diffusion}.
  It has been applied for  many different tasks, including   pattern matching  \citep{thanou2017learning},  ranking   \citep{ma2008mining, ma2011mining},  embedding   \citep{donnat2018learning},    clustering \citep{tremblay2014graph} and  classification \citep{zhu2005semi,merkurjev2016semi,berberidis2018adadif, DBLP:journals/corr/abs-1902-06105}. In this paper, we focus on the classification task: given  labels known for some nodes of the graph, referred to as the {\it seeds}, how to infer the labels of the other nodes? The number of seeds is typically small compared to the total number of nodes (e.g., 1\%), hence the name of  {\it semi-supervised} classification.

The most popular algorithm for semi-supervised classification in graphs   is based on a discrete version of the Dirichlet problem, the seeds playing the role of  the ``boundary'' of the Dirichlet problem \citep{zhu2003semi}. Specifically, one Dirichlet problem is solved per label, setting the  temperature of the corresponding seeds at 1 and the temperature of the other seeds as 0. Each node is then assigned the label with the highest temperature over the different Dirichlet problems. 
In this paper, we prove on a simple block model that this {\it vanilla} algorithm is not consistent, unless the temperatures are {\it centered} before classification. Experiments show that this simple modification of the algorithm significantly improves classification scores  on real datasets.


The rest of this paper is organized as follows. In section \ref{sec:dir}, we introduce the Dirichlet problem on graphs. Section \ref{sec:algo} describes our algorithm for   node classification. The analysis showing the 
consistency of our algorithm on a simple block model is presented in 
section \ref{sec:model}.  Section \ref{sec:exp} presents the experiments and section \ref{sec:conc} concludes the paper.

\section{Dirichlet problem on graphs}
\label{sec:dir}

In this section, we introduce the Dirichlet problem on graphs 
and characterize the solution, which will be used in the analysis. 

\subsection{Heat equation}
\label{ssec:pb}

Consider a graph $G$ with $n$ nodes indexed from $1$ to $n$. Denote by $A$ its adjacency matrix. This is a symmetric, binary matrix.   Let $d = A\mathbf{1}$ be the degree vector, which is assumed positive, and $D = \text{diag}(d)$. The Laplacian matrix is defined by
$$
L = D - A.
$$

Now let  $S$ be some strict subset  of $\{1, \dots, n\}$ and assume that the temperature of each
node $i \in S$ is set at some fixed value $T_i$. 
We are interested in the evolution of the temperatures of the other nodes. Heat exchanges occur through each edge of the graph proportionally to the temperature difference between the corresponding nodes. Then,
$$
\forall i \notin S, \quad \dfrac{dT_i}{dt} = \underset{j=1}{\overset{n}{\sum}}A_{ij}(T_j - T_i),
$$
that is
$$
\forall i \notin S,\quad  \dfrac{dT_i}{dt} = -(LT)_i,
$$
where $T$ is the vector of temperatures. This is the heat equation in discrete space, where  $-L$ plays the role of the Laplace operator in \eqref{eq:heat}. At equilibrium, $T$ satisfies Laplace’s equation:
\begin{equation}
    \label{eq:laplace}
    \forall i \notin S,\quad (LT)_i = 0.
\end{equation}
We say that the vector $T$ is \textit{harmonic}. With the boundary conditions $T_i$ for all $i \in S$, this defines a Dirichlet problem in discrete space. Observe that Laplace's equation \eqref{eq:laplace} can be written equivalently:
\begin{equation}
    \label{eq:laplace2}
    \forall i \notin S,\quad T_i = (PT)_i,
\end{equation}
where $P =D^{-1}A$ is the transition matrix of the random walk in the graph.

\subsection{Solution to the Dirichlet problem}
\label{ssec:sol}

We now characterize the solution to the Dirichlet problem in discrete space. Without any loss of generality, we assume that nodes with unknown temperatures (i.e., not in $S$) are indexed from 1 to $n-s$ so that the vector of temperatures can be written
$$
T = \begin{bmatrix}X\\ Y\end{bmatrix},
$$
where $X$ is the unknown vector of temperatures at equilibrium, of dimension $n-s$. Writing 
the transition matrix in block form as
$$
P = \begin{bmatrix}Q & R \\ \cdot & \cdot\end{bmatrix},
$$
it follows from \eqref{eq:laplace2} that:
\begin{equation}
    \label{eq:linear}
    \quad X = QX + RY,
\end{equation}
so that:
\begin{equation}
    \label{eq:solinear}
    \quad X = (I-Q)^{-1}RY.
\end{equation}
Note that the inverse of the matrix $I-Q$ exists whenever  the graph is connected, which implies that  the matrix $Q$ is  sub-stochastic with spectral radius strictly less than 1 \citep{chung}.

The exact solution \eqref{eq:solinear} requires to solve a (potentially large) linear system. In practice, a very good approximation is provided by a few iterations of \eqref{eq:linear}, the rate of convergence  depending on the spectral radius of the  matrix $Q$. The small-world property of real graphs suggests that a few iterations are enough in practice  \citep{watts}. 
This will be confirmed by the experiments.

\subsection{Extensions}
\label{ssec:ext}

The results apply to
weighted graphs, with a positive weight assigned to each edge. This weight  can then be  interpreted as the \textit{thermal conductivity} of the edge in the diffusion process. 
Interestingly, the results also apply to directed graphs. Indeed, a directed graph $G$ of $n$ nodes, with  adjacency matrix $A$,   can be considered as a bipartite graph of $2n$ nodes, with adjacency matrix:
$$
\begin{bmatrix}
0 & A \\
A^T & 0
\end{bmatrix}
$$
The diffusion can be applied to this bipartite graph, which is undirected. Observe that 
each node of the directed graph $G$ is duplicated in the bipartite graph and is thus characterized by 2 temperatures, one as heat source (for outgoing edges) and one as heat destination (for incoming edges). 
It is not necessary for the directed graph to be  strongly connected; only the associate bipartite graph needs to be connected.

\section{Node classification algorithm}
\label{sec:algo}

In this section, we introduce a node classification algorithm based on  the Dirichlet problem.
The objective is  to infer the labels of all nodes given the  labels of a few nodes called the \textit{seeds}.
Our algorithm is a simple modification of the popular method proposed by \cite{zhu2003semi}. Specifically, we propose to center  temperatures  before classification.

\subsection{Binary classification}

When there are only two different labels, the classification can be done by solving  one Dirichlet problem.
The idea is to use the seeds with label 1 as hot sources, setting their temperature at $1$, and   the seeds with label 2 as cold sources, setting their temperature at $0$.
The solution to this Dirichlet problem gives temperatures between 0 and 1, as illustrated by Figure \ref{fig:karate}.

\begin{figure}[h]
    \centering
 \subfloat[Ground truth.]{
    \includegraphics[width=0.49\linewidth]{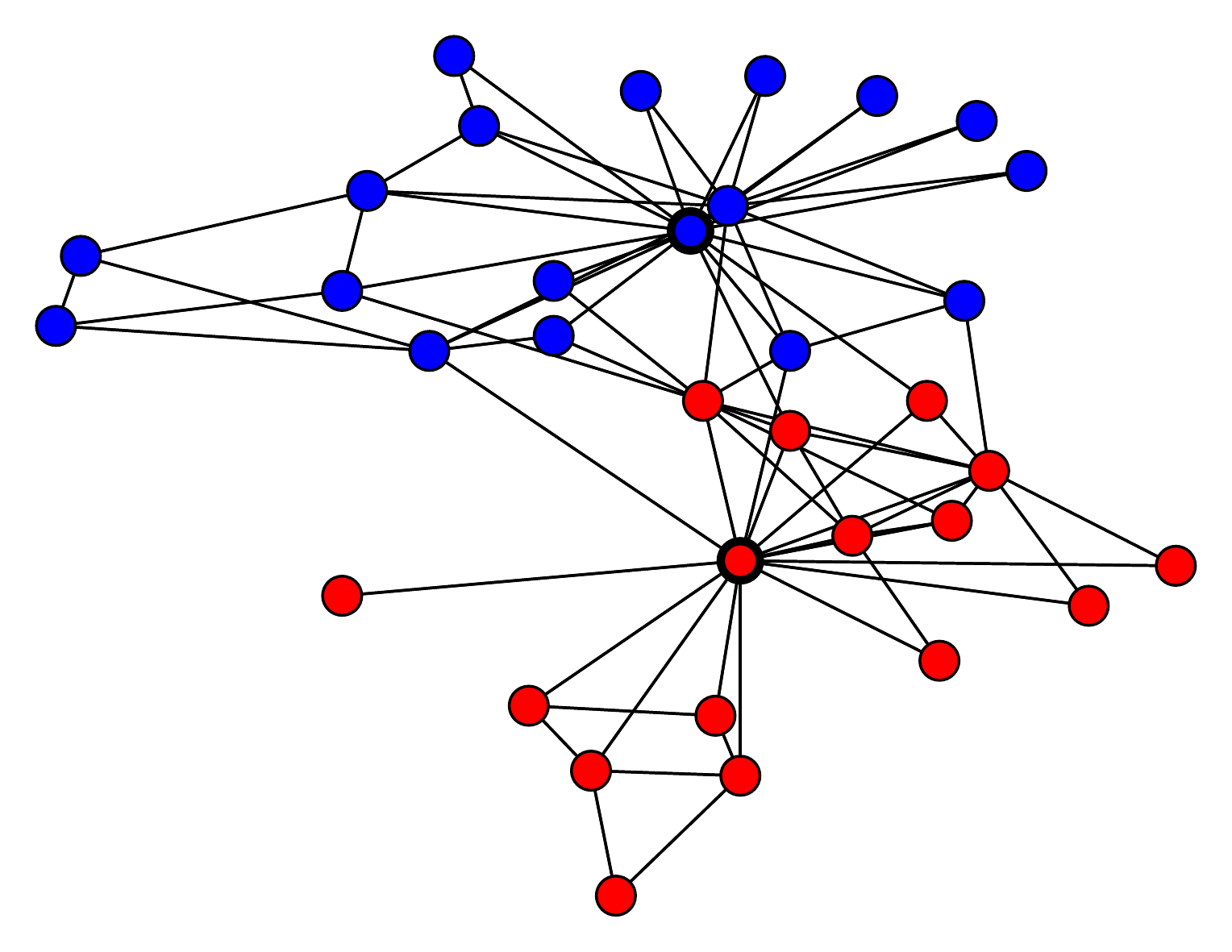}}
    \subfloat[Solution to the Dirichlet problem.]{ \includegraphics[width=0.49\linewidth]{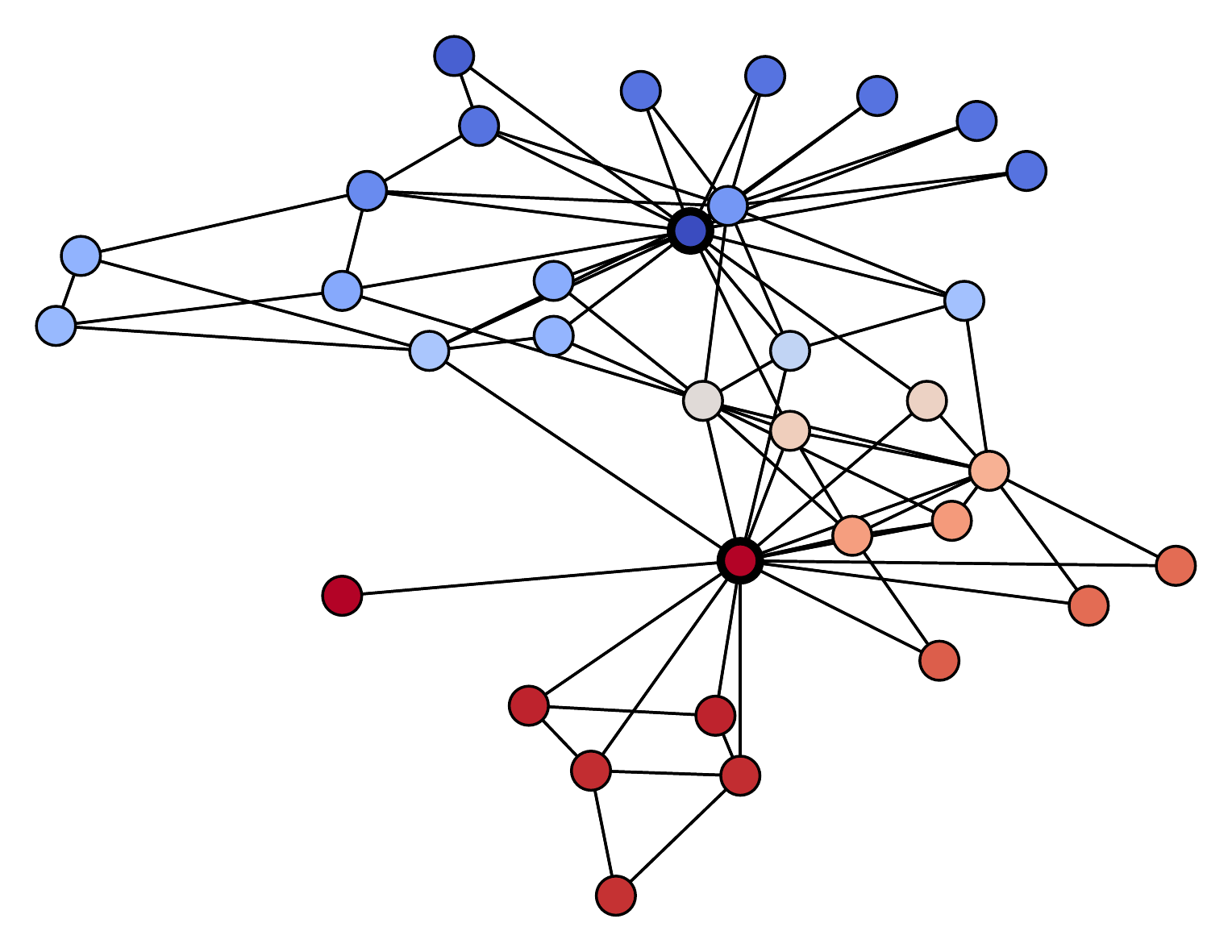}}
    \caption{Binary classification of the Karate Club graph  \citep{zachary1977information} with 2 seeds (indicated with a black circle). Red nodes have label 1, blue nodes have label 2.}
    \label{fig:karate}
\end{figure}

A natural approach, proposed by \cite{zhu2003semi}, consists in assigning label 1 to all nodes with temperature above 0.5 and label 2 to other nodes. The analysis of section \ref{sec:model} suggests that it is preferable to set the threshold to the mean temperature, 
$$\bar T = \frac 1 n \sum_{i=1}^n T_i.
$$
Specifically, all nodes with temperature above $\bar T$ are assigned  label 1, the other are assigned  label 2. Equivalently, temperatures are centered before classification: after centering, nodes with positive temperature are  assigned  label 1, the others are assigned  label 2.

Note that the temperature of each node can be used to assess the confidence in the classification: the closer the temperature to  the mean, the lower the confidence.
This is illustrated by Figure \ref{fig:karate} (the lighter the color, the lower the confidence). In this case, only one node is misclassified and has indeed a temperature close to the mean.

\subsection{Multi-class classification}
In the presence of more than 2 labels, we use 
 a \textit{one-against-all} strategy: the seeds of each label alternately serve as hot sources (temperature 1) while all the other seeds serve as cold sources (temperature 0). After centering the temperatures (so that the mean temperature of each diffusion is equal to 0), each node is assigned  the label that maximizes its temperature. This algorithm, we refer to as Dirichlet classifier, is parameter-free.

\begin{algorithm}[ht]
\caption{Dirichlet classifier}
\begin{algorithmic}[1]
\REQUIRE Seed set $S$ and associated labels $y\in\{1,\ldots,K\}$.
\FOR{$k$ in $\{1,\ldots,K\}$}
    \STATE $T^S = 0$.
    \FOR{$i \in S$}
        \IF{$y_i = k$}
            \STATE $T^S_i = 1$.
        \ENDIF
    \ENDFOR
    \STATE $T^{(k)} \leftarrow \text{Dirichlet}(S, T^S)$.
    \STATE $\Delta^{(k)} \leftarrow T^{(k)} - \text{mean}({T}^{(k)})$
\ENDFOR
    \FOR{$i \not \in S$}
\STATE $x_i = \argmax_{k=1,\ldots,K}(\Delta^{(k)}_i)$
    \ENDFOR
\RETURN $x$, labels of nodes outside $S$
\end{algorithmic}
\label{algo:maxdiff}
\end{algorithm}

The key difference with the {\it vanilla} method lies in temperature centering (line 9 of the algorithm). Another variant proposed by \cite{zhu2003semi} consists in rescaling   the temperature vector  by the  weight of the considered label in the seeds (see equation (9) in their paper).

\subsection{Time complexity}

The time complexity depends on the algorithm used to solve the Dirichlet problem. We here focus on the approximate solution by successive iterations of \eqref{eq:linear}.
Let $m$ be the number of edges of the graph. Using the  Compressed Sparse Row format for the adjacency matrix, each matrix-vector product  has a complexity of $O(m)$. The complexity of Algorithm \ref{algo:maxdiff} is then $O(NKm)$, where $N$ is the number of iterations.  Note that the $K$ Dirichlet problems are independent and can thus be computed in parallel.

\section{Analysis}
\label{sec:model}

In this section, we prove the consistency of Algorithm \ref{algo:maxdiff} on a simple block model. In particular, we highlight the importance of temperature centering in the analysis.

\subsection{Block model}

Consider a graph of $n$ nodes consisting of $K$ blocks  of respective sizes $n_1,\ldots,n_K$, forming a partition of the set of nodes.  There are  $s_1,\ldots,s_K$ seeds in these blocks, which are respectively  assigned labels $1,\ldots,K$. Intra-block edges have weight $p$ and inter-block edges have weight $q$. 
We expect  the algorithm to assign label $k$ to all nodes of  block $k$ whenever $p>q$, for all $k=1,\ldots,K$. 

\subsection{Dirichlet problem}

Consider  the Dirichlet problem when  the temperature of the $s_1$ seeds of block 1 is set to 1 and the temperature  of the other seeds is set to 0. We have an explicit  solution to this Dirichlet problem, whose proof is provided in the appendix.

\begin{lemma}
\label{prop:dirichlet}
Let $T_k$ be the temperature of non-seed nodes of  block $k$ at equilibrium. We have:
    \begin{align*}
(s_1(p-q) + nq) T_1  &= s_1 (p-q) + n\bar T q,\\
(s_k(p-q) + nq) T_k  &= n\bar T q\quad \quad k=2,\ldots,K,
\end{align*}
where $\bar T$ is the average temperature, given by:
$$
 \bar T = \left(\frac{s_1}{n} \frac{n_1(p-q) + nq}{ s_1(p-q) + nq}\right) / \left(1-\sum_{k=1}^K \frac{(n_k - s_k)q}{s_k(p-q)+nq}\right).
$$
\end{lemma}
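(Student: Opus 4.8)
The plan is to exploit the symmetry of the block model to reduce the Dirichlet problem to a linear system in the $K$ unknowns $T_1,\dots,T_K$, and then solve that system in closed form.

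First I would justify that all non-seed nodes of a given block share a common equilibrium temperature, as the statement presupposes. For any block $k$, swapping two of its non-seed nodes is an automorphism of the weighted graph --- edge weights depend only on block membership --- and it fixes the seed set $S$ together with the prescribed boundary temperatures. Since the graph is connected, the matrix $I-Q$ is invertible and the Dirichlet problem has the unique solution \eqref{eq:solinear}; hence the temperature vector is invariant under this swap, and as such transpositions generate all permutations of the non-seed nodes of block $k$, those nodes all carry the same temperature, which I denote $T_k$.

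Next I would write Laplace's equation in the equivalent form $d_iT_i=(AT)_i$ for an arbitrary non-seed node $i$ of block $k$. Its degree is $p(n_k-1)+q(n-n_k)$, and among its neighbors there are $s_k$ seeds and $n_k-1-s_k$ non-seeds in block $k$, and, for each $l\ne k$, $s_l$ seeds and $n_l-s_l$ non-seeds in block $l$; the seeds of block $l$ have temperature $\mathbf{1}[l=1]$ and its non-seeds have temperature $T_l$. Substituting these values, moving the $T_k$-terms to the left-hand side, and rewriting the contribution of the blocks $l\ne k$ as
$$\sum_{l=1}^{K}\bigl(s_l\mathbf{1}[l=1]+(n_l-s_l)T_l\bigr)-\bigl(s_k\mathbf{1}[k=1]+(n_k-s_k)T_k\bigr)=n\bar T-s_k\mathbf{1}[k=1]-(n_k-s_k)T_k,$$
where I have used $n\bar T=\sum_i T_i=s_1+\sum_{l}(n_l-s_l)T_l$, one checks that every term carrying the block size $n_k$ cancels and one is left with exactly
$$\bigl(s_k(p-q)+nq\bigr)T_k=s_k(p-q)\,\mathbf{1}[k=1]+nq\,\bar T,$$
i.e. the two identities of the lemma for $k=1$ and for $k\ge 2$. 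The coefficient $s_k(p-q)+nq$ is positive, so $T_k$ is well defined.

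Finally, to obtain the closed form for $\bar T$, I would substitute $T_k=\bigl(s_k(p-q)\mathbf{1}[k=1]+nq\bar T\bigr)/\bigl(s_k(p-q)+nq\bigr)$ into $n\bar T=s_1+\sum_{k=1}^K(n_k-s_k)T_k$ and isolate $\bar T$: the $k=1$ contribution $s_1+(n_1-s_1)\tfrac{s_1(p-q)}{s_1(p-q)+nq}$ collapses to $\tfrac{s_1(n_1(p-q)+nq)}{s_1(p-q)+nq}$, while the $\bar T$-terms assemble into the factor $1-\sum_{k=1}^K\tfrac{(n_k-s_k)q}{s_k(p-q)+nq}$, which yields the stated expression after dividing by $n$. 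The only genuine work is the bookkeeping in the third step --- counting seeds versus non-seeds among the neighbors of $i$ and checking that the $n_k$-dependent terms really do cancel --- together with the routine algebra of the last step; the symmetry reduction and the solution of the resulting $K\times K$ system are otherwise immediate.
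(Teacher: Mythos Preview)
Your proposal is correct and follows essentially the same route as the paper: write $d_iT_i=(AT)_i$ for a non-seed node, collect the contributions block by block, and use the identity $n\bar T=s_1+\sum_k(n_k-s_k)T_k$ (the paper packages this as $V=n\bar T-s_1$) to arrive at the two displayed equations. Your version is in fact slightly more complete than the paper's, since you supply the automorphism argument for why $T_k$ is well defined and you carry out explicitly the back-substitution that yields the closed form for $\bar T$, both of which the paper leaves implicit.
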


\subsection{Classification}
\label{ssec:clf}

We now state the main result of the paper: the Dirichlet classifier is a consistent algorithm for the block model, in the sense that all nodes are correctly classified whenever $p> q$. 

\begin{theorem}
If $p> q$, then $x_i=k$ for all non-seed nodes $i$ of each block $k$,  for any parameters $n_1,\ldots,n_K$ (block sizes) and $s_1,\ldots,s_K$ (numbers of seeds).
\end{theorem}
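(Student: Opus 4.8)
The plan is to use the explicit solution of Lemma~\ref{prop:dirichlet} together with the permutation symmetry of the one-against-all construction, which reduces the theorem to two sign inequalities on the average temperatures. Fix a block, say block $k$, and a non-seed node $i$ in it; the goal is to show $\Delta^{(k)}_i>\Delta^{(\ell)}_i$ for every $\ell\neq k$, so that $x_i=\argmax_{\ell}\Delta^{(\ell)}_i=k$.

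First I would identify what the $\ell$-th Dirichlet problem (seeds of block $\ell$ hot, all other seeds cold) looks like. Relabelling block $\ell$ as ``block $1$'' leaves the sums over all blocks in Lemma~\ref{prop:dirichlet} unchanged, so the lemma gives, writing $T^{(\ell)}_k$ for the equilibrium temperature of the non-seed nodes of block $k$ in this problem and $\bar T^{(\ell)}$ for the (all-node) mean,
\begin{align*}
(s_\ell(p-q)+nq)\,T^{(\ell)}_\ell &= s_\ell(p-q) + n\bar T^{(\ell)} q,\\
(s_k(p-q)+nq)\,T^{(\ell)}_k &= n\bar T^{(\ell)} q \qquad (k\neq \ell).
\end{align*}
Subtracting $\bar T^{(\ell)}$ and simplifying — the $nq\,\bar T^{(\ell)}$ terms cancel — yields, for our node $i$ in block $k$,
\begin{align*}
\Delta^{(k)}_i &= \frac{s_k(p-q)\,(1-\bar T^{(k)})}{s_k(p-q)+nq},\\
\Delta^{(\ell)}_i &= \frac{-\,s_k(p-q)\,\bar T^{(\ell)}}{s_k(p-q)+nq}\qquad (\ell\neq k).
\end{align*}
Since $p>q$ and every block contains at least one seed, the denominator $s_k(p-q)+nq$ and the factor $s_k(p-q)$ are strictly positive, so $\Delta^{(k)}_i$ has the sign of $1-\bar T^{(k)}$ and each $\Delta^{(\ell)}_i$, $\ell\neq k$, has the sign of $-\bar T^{(\ell)}$. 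Thus the proof reduces to showing $0<\bar T^{(\ell)}<1$ for all $\ell$: this gives $\Delta^{(k)}_i>0>\Delta^{(\ell)}_i$, hence $x_i=k$.

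It remains to establish the two bounds on $\bar T^{(\ell)}$. The bound $\bar T^{(\ell)}<1$ is the maximum principle: each Dirichlet solution lies between its boundary values $0$ and $1$, and (for $K\ge 2$; the case $K=1$ being trivial) at least one seed has temperature $0$, so the average over the $n$ nodes is strictly below $1$. The bound $\bar T^{(\ell)}>0$ I would read off the closed form in Lemma~\ref{prop:dirichlet}: its numerator $\tfrac{s_\ell}{n}\tfrac{n_\ell(p-q)+nq}{s_\ell(p-q)+nq}$ is positive, and using $s_k(p-q)+nq>nq$ (valid as $p>q$, $s_k\ge 1$) each summand in the denominator is at most $\tfrac{n_k-s_k}{n}$, whence $\sum_k\tfrac{(n_k-s_k)q}{s_k(p-q)+nq}<\tfrac{n-s}{n}<1$ and the denominator is at least $\tfrac{s}{n}>0$. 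These elementary positivity checks (plus the degenerate cases such as $q=0$ or all nodes being seeds) are the only real bookkeeping; the substance of the argument is the cancellation after centering, which is exactly the step the vanilla algorithm omits — without subtracting $\bar T$ one compares $T^{(k)}_i$ against $T^{(\ell)}_i$ directly, and the formulas above show those need not be ordered correctly.
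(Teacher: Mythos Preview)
Your proof is correct and follows essentially the same route as the paper's: both compute the centered temperatures from Lemma~\ref{prop:dirichlet}, obtain the identities $(s_k(p-q)+nq)\,\Delta^{(k)}_i=s_k(p-q)(1-\bar T^{(k)})$ and $(s_k(p-q)+nq)\,\Delta^{(\ell)}_i=-s_k(p-q)\bar T^{(\ell)}$, and conclude via $\bar T\in(0,1)$ and the symmetry among labels. The only difference is that the paper simply asserts $\bar T\in(0,1)$, whereas you supply the two supporting arguments (maximum principle for the upper bound, direct inspection of the closed form for the lower bound); this extra care is welcome but does not change the approach.
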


\begin{proof}
Let $\delta^{(1)}_k = T_k - \bar T$ be the deviation of temperature of non-seed nodes of block $k$ for the Dirichlet problem associated with label 1.
    In view of Lemma \ref{prop:dirichlet}, we have:
\begin{align*}
(s_1(p-q) + nq) \delta^{(1)}_1  &= s_1 (p-q) (1-\bar T),\\
(s_k(p-q) + nq)\delta^{(1)}_k  &= -s_k(p-q) \bar T \quad \quad k=2,\ldots,K,
\end{align*}
For $p>q$, using the fact that $\bar T \in (0,1)$, we get $\delta^{(1)}_1 > 0$ and $\delta^{(1)}_k<0$ for all $k=2,\ldots,K$. By symmetry, for each label $l = 1,\ldots,K$,
$\delta^{(l)}_l > 0$ and $\delta^{(l)}_k<0$ for all $k\ne l$.
We deduce that for each block $k$, $x_i=\arg\max_{l}\delta^{(l)}_k = k$ for all non-seed nodes $i$ of  block $k$.
\end{proof}

Observe that the temperature centering is critical for consistency. In the absence of centering,  non-seed nodes of block 1 are correctly classified if and only if their temperature is the highest in the Dirichlet problem associated with label 1.
In view of Lemma \ref{prop:dirichlet}, this means that for all $k=2,\ldots,K$,
\begin{align*}
&s_1 q \frac{n_1(p-q) + nq}{s_1(p-q) + nq} + s_1(p-q)\left(1-\sum_{j=1}^K \frac{(n_j - s_j)q}{s_j(p-q)+nq}\right)
> s_k q \frac{n_k(p-q) + nq}{s_k(p-q) + nq}.
\end{align*}
This condition might be violated even if $p>q$, depending on the parameters  $n_1,\ldots,n_K$ and $s_1,\ldots,s_K$. 
In the practically interesting case where $s_1 << n_1,\ldots,s_K << n_K$ for instance (low fractions of seeds), the condition requires:
$$
s_1  (n_1(p-q) + nq) > s_k  (n_k(p-q) + nq).
$$
For blocks of same size, this means that only blocks with the largest number of seeds are correctly classified.  The classifier is biased towards labels with a large number of seeds. This sensitivity of the {\it vanilla} algorithm to the  label distribution  of seeds will be confirmed in the experiments on real graphs.

\section{Experiments}
\label{sec:exp}

In this section, we show the impact of temperature centering on the quality of classification using both synthetic and real data.
We do {\it not} provide a general benchmark of classification methods as the focus of the paper is on the impact of temperature centering in heat diffusion methods. 
We focus on 3 algorithms: the vanilla algorithm (without temperature centering), the weighted version proposed by \citep{zhu2003semi} (also without temperature centering) and our algorithm (with temperature centering).

All datasets and codes are available online\footnote{\url{https://github.com/nathandelara/Dirichlet}}, making the experiments fully reproducible.


\subsection{Synthetic data}

We first use the stochastic block model (SBM) \citep{airoldi2008mixed} to generate graphs with an underlying structure in clusters. This is the stochastic version of the block model used in the analysis. There are $K$ blocks of respective sizes $n_1,\ldots,n_K$. Nodes of the same block are connected with probability  $p$ while nodes in different blocks are connected probability $q$.
We denote by $s_k$ the number of  seeds in block $k$ and by $s$  the total number of seeds.

We first compare the performance of the algorithms on a binary classification task ($K=2$) for a graph of $n=10\,000$ nodes with $p=10^{-3}$ and $q=10^{-4}$, in two different settings:
\begin{itemize}
{\bf \item Seed asymmetry:} Both blocks have the same size   $n_1= n_2 = 5000$ but different numbers of seeds, with $s_1/s_2 \in \{1, 2, \dots,10\}$ and   $s_2 = 250$ (5\% of nodes in block 2).
\item 
{\bf Block size asymmetry:} The blocks have different sizes with 
ratio
$n_1/n_2 \in \{1, 2, \dots, 10\}$ and seeds in proportion to these sizes, with a total of $s= 1\,000$ seeds ($10\%$ of nodes). 
\end{itemize}

For each configuration, the experiment is repeated 10 times. Randomness comes both from the generation of the graph and from the selection of the seeds. We report the F1-scores in Figure \ref{fig:sbm} (average $\pm$ standard deviation). Observe that the variability of the results is very low due to the relatively large size of the graph. As expected, the centered version is much more robust to both types of asymmetry. Besides, in case of asymmetry in the  seeds, the weighted version of the algorithm tends to amplify the bias and leads to lower scores  than the vanilla version.

\begin{figure}[h]
    \centering
    \subfloat[Seed asymmetry.]{\includegraphics[width=0.49\linewidth]{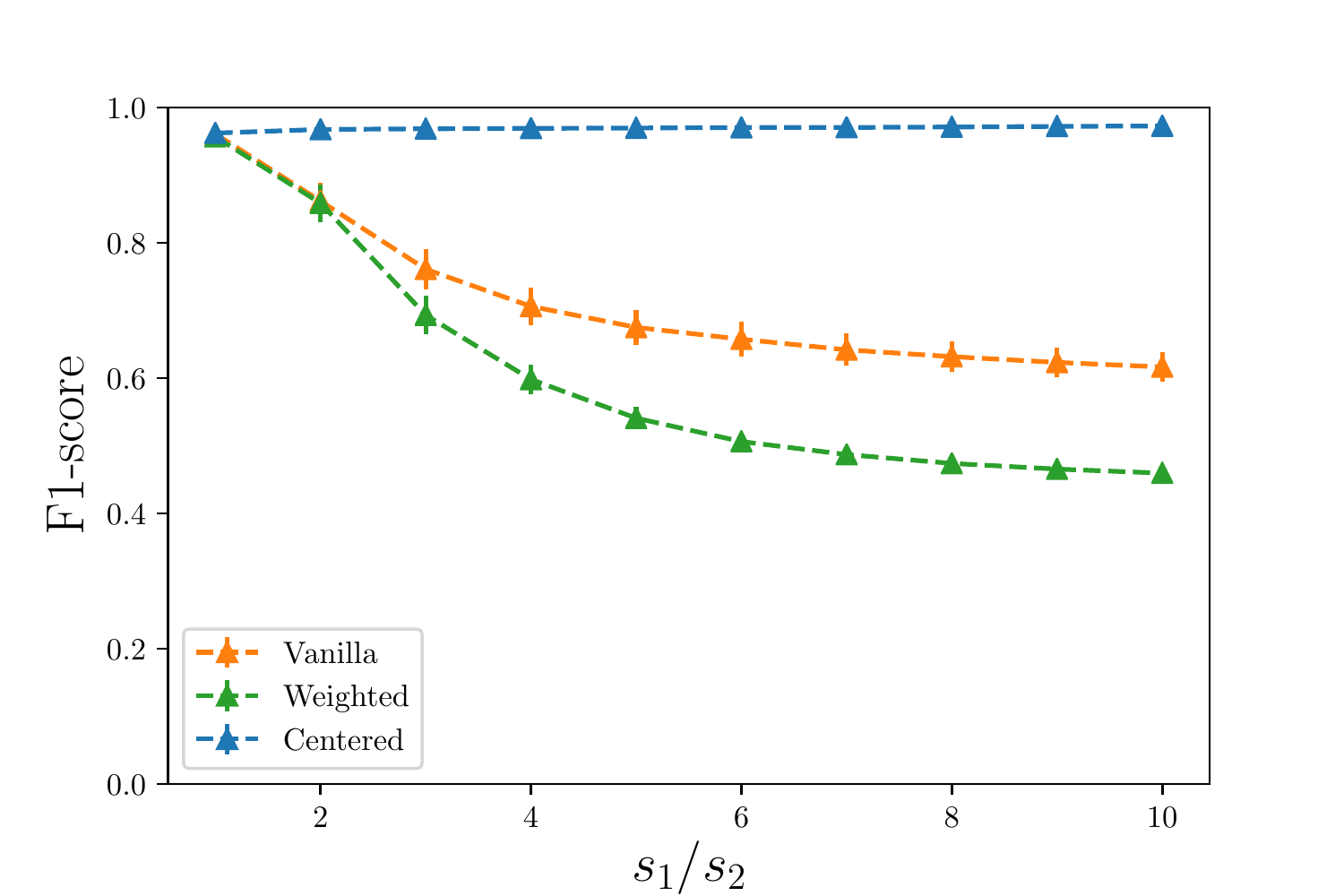}}
    \subfloat[Block size asymmetry.]{\includegraphics[width=0.49\linewidth]{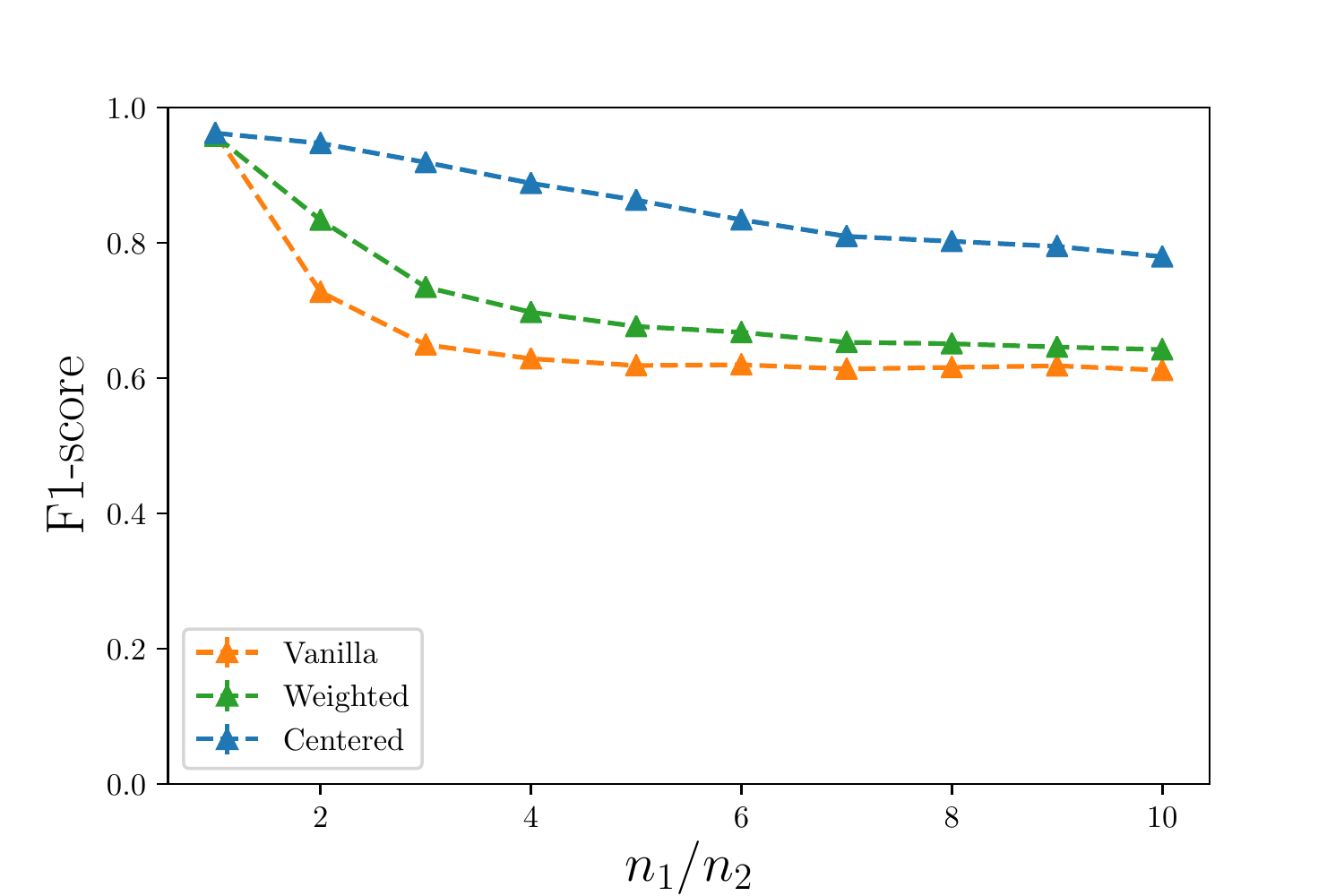}}
    \caption{Binary classification performance on the SBM.}
    \label{fig:sbm}
\end{figure}

We show in Figure \ref{fig:sbm/multi} the same type of results for $K = 10$ blocks and $p=5.10^{-2}$. For the block size asymmetry, the  size of blocks $2,\ldots,10$ is set to $1\,000$.

\begin{figure}[h]
    \centering
    \subfloat[Seed asymmetry.]{\includegraphics[width=0.49\linewidth]{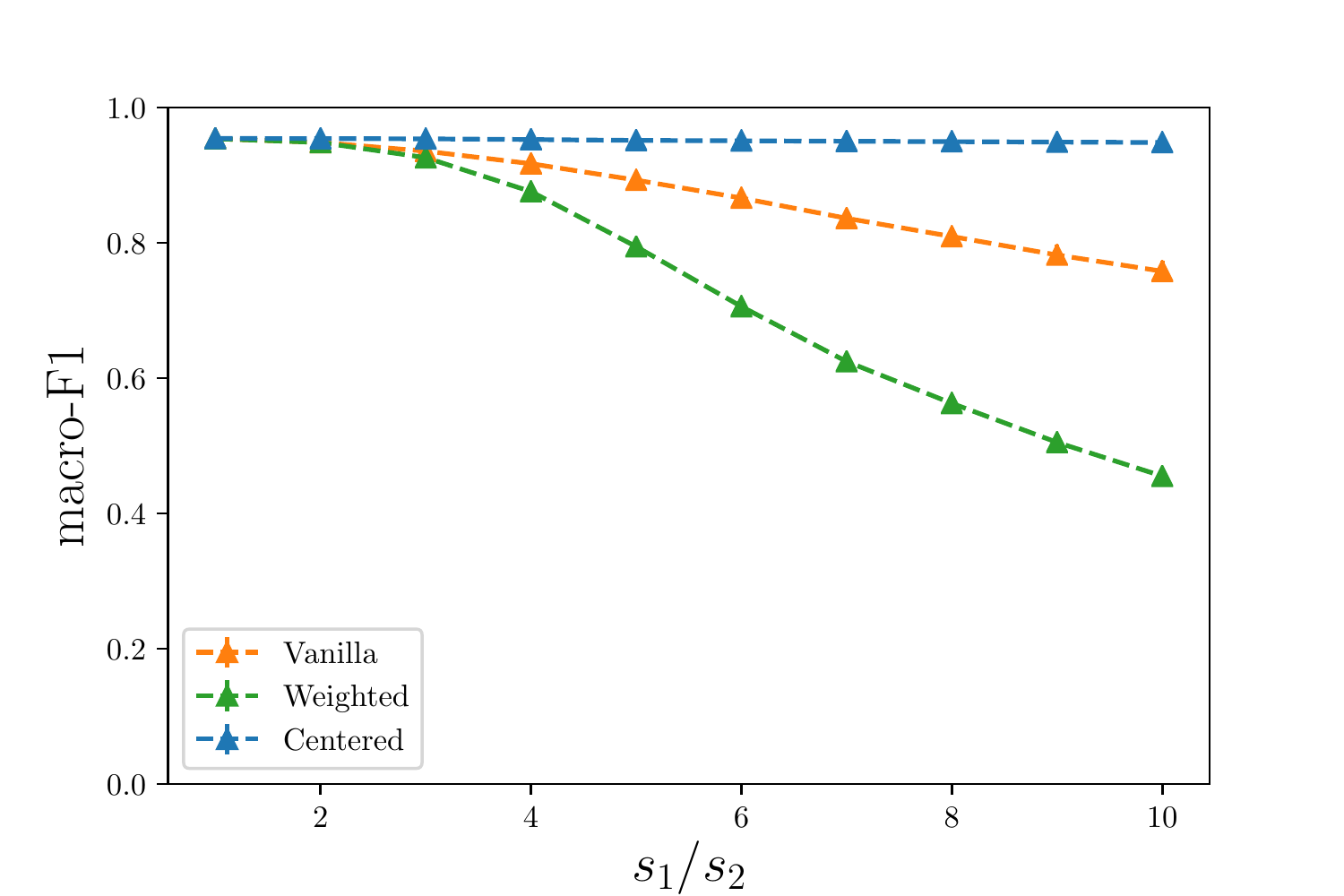}}
    \subfloat[Block size asymmetry.]{\includegraphics[width=0.49\linewidth]{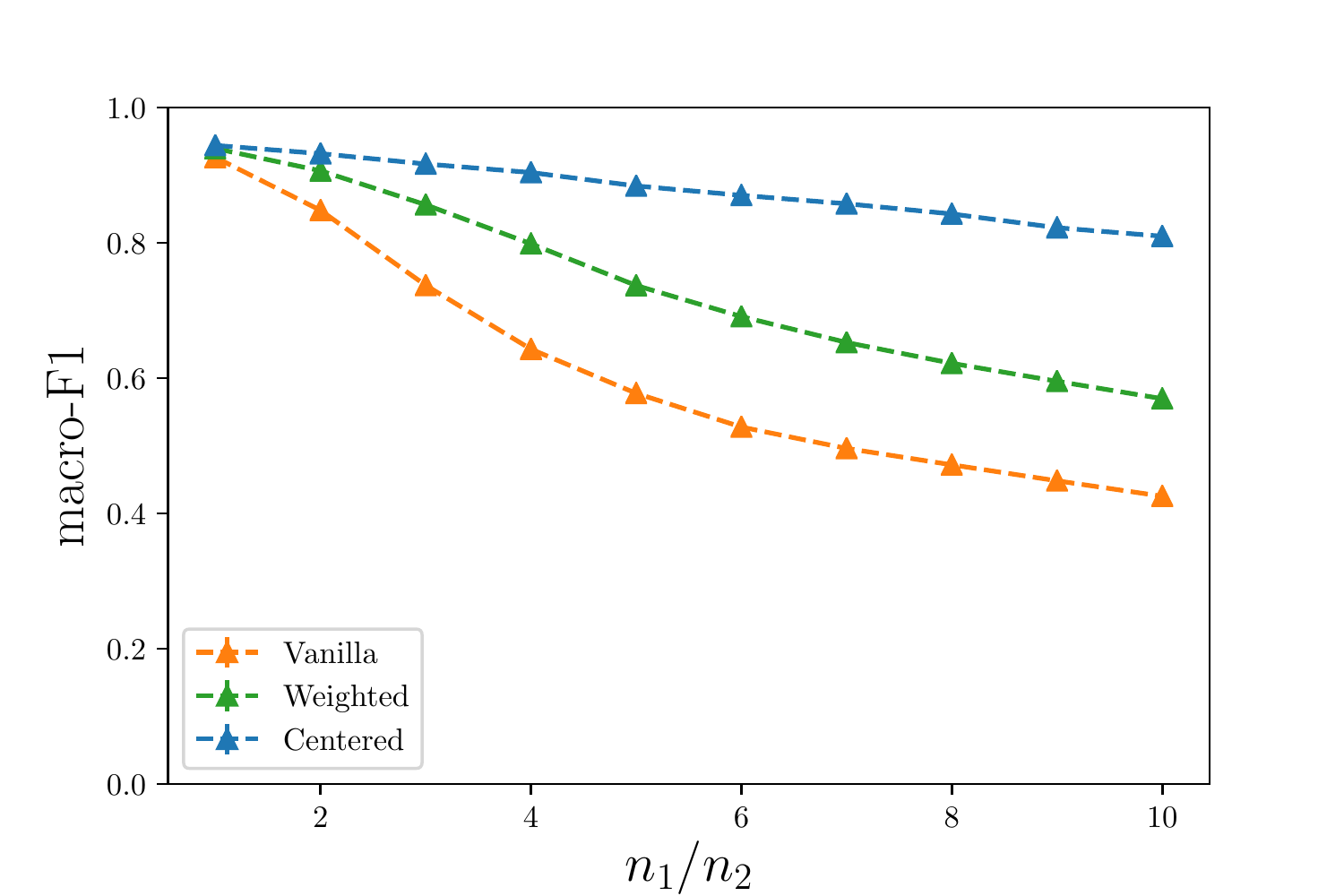}}
    \caption{Multi-label classification performance on the SBM.}
    \label{fig:sbm/multi}
\end{figure}

\subsection{Real data}
\label{ssec:datasets}

We  use datasets from the NetSet\footnote{\url{https://netset.telecom-paris.fr/}} and SNAP\footnote{\url{https://snap.stanford.edu/data/}} collections (see 
Table \ref{tab:datasets}).

\begin{table}[ht]
    \centering
    \caption{Datasets.}
    \begin{tabular}{l|cccc}
        dataset & \# nodes & \# edges & \# classes & $(\%)$ labeled\\
        \toprule
        CO & $2\ 708$ & $10\ 556$ & 7 & 100\\ 
        CS & $3\ 264$ & $9\ 072$ & 6 & 100\\
        WS & $4\ 403$ & $112\ 834$ & 16 & 100\\
        WV & $10\ 012$ & $792\ 091$ & 11 & 100\\
        WL & $3\ 210\ 346$ & $67\ 196\ 296$ & 10 & 100\\
        DBLP & $317\ 080$ & $2\ 099\ 732$ & 5000 & 29\\
        Amazon & $334\ 863$ & $1\ 851\ 744$ & 5000 & 5
    \end{tabular}
    \label{tab:datasets}
\end{table}

These datasets can be categorized into 3 groups:
\begin{itemize}
\item{\bf Citations networks:} Cora (CO) and CiteSeer (CS) are citation networks between scientific publications. These are standard  datasets for node classification \citep{fey2018splinecnn, huang2018adaptive, wijesinghe2019dfnets}. 

\item{\bf Wikipedia graphs:} Wikipedia for schools (WS) \citep{haruechaiyasak2008article}, Wikipedia vitals (WV) and Wikilinks (WL) are graphs of hyperlinks between different selections of Wikipedia pages. In WS and WV, pages are labeled by category (People, History, Geography...). For WL, pages are labeled through clusters of  words used in these articles.
As these graphs are directed, we use the extension of the algorithm described in \S\ref{ssec:ext}, with nodes considered as heat sources.

\item{\bf Social networks:} DBLP and Amazon are social networks with partial ground-truth communities \citep{snapnets}. As nodes are partially labeled and some nodes have  several labels,  the results for these datasets are presented separately, with specific  experiments based on binary classification.
\end{itemize}




For the citation networks and the Wikipedia graphs, we  compare the classification performance of the algorithms in terms of  macro-F1 score and two seeding policies:
\begin{itemize}
\item \textbf{Uniform sampling}, where seeds are sampled uniformly at random.
\item \textbf{Degree sampling}, where seeds are sampled in  proportion to their degrees.
\end{itemize}

In both cases, the seeds represent $1\%$  of the total number of nodes in the graph.
The process is repeated 10 times for each configuration. We defer the results for the weighted version of the algorithm to the supplementary material as they are very close to those obtained with the vanilla algorithm.

We report the results in Tables~\ref{tab:macrof1/node} and \ref{tab:macrof1/edge} for uniform sampling  and degree sampling, respectively. We see that centered version outperforms the vanilla one by a significant margin.


\begin{table}[ht]
    \centering
    \caption{Macro-F1 scores (mean $\pm$ standard deviation) with uniform seed sampling.}
    \vspace{1mm}
    
    \begin{tabular}{l|ccccc}
        algorithm & CO & CS & WS & WV & WL\\
        \toprule
        Vanilla & $0.19 \pm 0.12$ & $0.17 \pm 0.04$ & $0.04 \pm 0.02$ & $0.09 \pm 0.04$ & $0.19 \pm 0.01$\\
        Centered & $\mathbf{0.42 \pm 0.18}$ & $\mathbf{0.36 \pm 0.04}$ & $\mathbf{0.16 \pm 0.11}$ & $\mathbf{0.55 \pm 0.03}$ & $\mathbf{0.51 \pm 0.01}$\\
    \end{tabular}
    \label{tab:macrof1/node}
\end{table}

\begin{table}[ht]
    \centering
    \caption{Macro-F1 scores (mean $\pm$ standard deviation) with degree seed sampling.}
    \vspace{1mm}
    
    \begin{tabular}{l|ccccc}
        algorithm & CO & CS & WS & WV & WL\\
        \toprule
        Vanilla & $0.30 \pm 0.08$ & $0.16 \pm 0.07$ & $0.02 \pm 0.01$ & $0.10 \pm 0.04$ & $0.34 \pm 0.00$\\
        Centered & $\mathbf{0.51 \pm 0.08}$ & $\mathbf{0.26 \pm 0.13}$ & $\mathbf{0.06 \pm 0.04}$ & $\mathbf{0.48 \pm 0.02}$ & $\mathbf{0.45 \pm 0.00}$\\
    \end{tabular}
    \label{tab:macrof1/edge}
\end{table}

For the social networks, we perform independent binary classifications for each of the 3 dominant labels and average the scores. As these datasets have only a few labeled nodes, we consider the most favorable scenario where   seeds are sampled in proportion to the labels.
Seeds  still represent $1\%$ of the nodes. The results are shown in Table \ref{tab:macrof1/social}.

\begin{table}[ht]
    \centering
    \caption{Macro-F1 scores (mean $\pm$ standard deviation) with balanced seed sampling.}
    \vspace{1mm}
    
    \begin{tabular}{l|cc}
        algorithm & DBLP & Amazon \\
        \toprule
        Vanilla & $0.04 \pm 0.00$ & $0.05 \pm 0.01$\\
        Centered & $\mathbf{0.19 \pm 0.01}$ & $\mathbf{0.18 \pm 0.02}$\\
    \end{tabular}
    \label{tab:macrof1/social}
\end{table}

Finally, we assess the classification performance of the algorithms in the case of seed asymmetry. Specifically, we first sample $s = 1\%$ of the nodes uniformly at random and  progressively increase the number of seeds for the dominant class of each dataset, say label 1.

The process is repeated 10 times for each configuration. Figure~\ref{fig:seed} shows the  macro-F1 scores. We see that the performance of the centered algorithm remains steady in the presence of seed asymmetry.

\begin{figure}[h]
    \centering
    \includegraphics[width=0.49\linewidth]{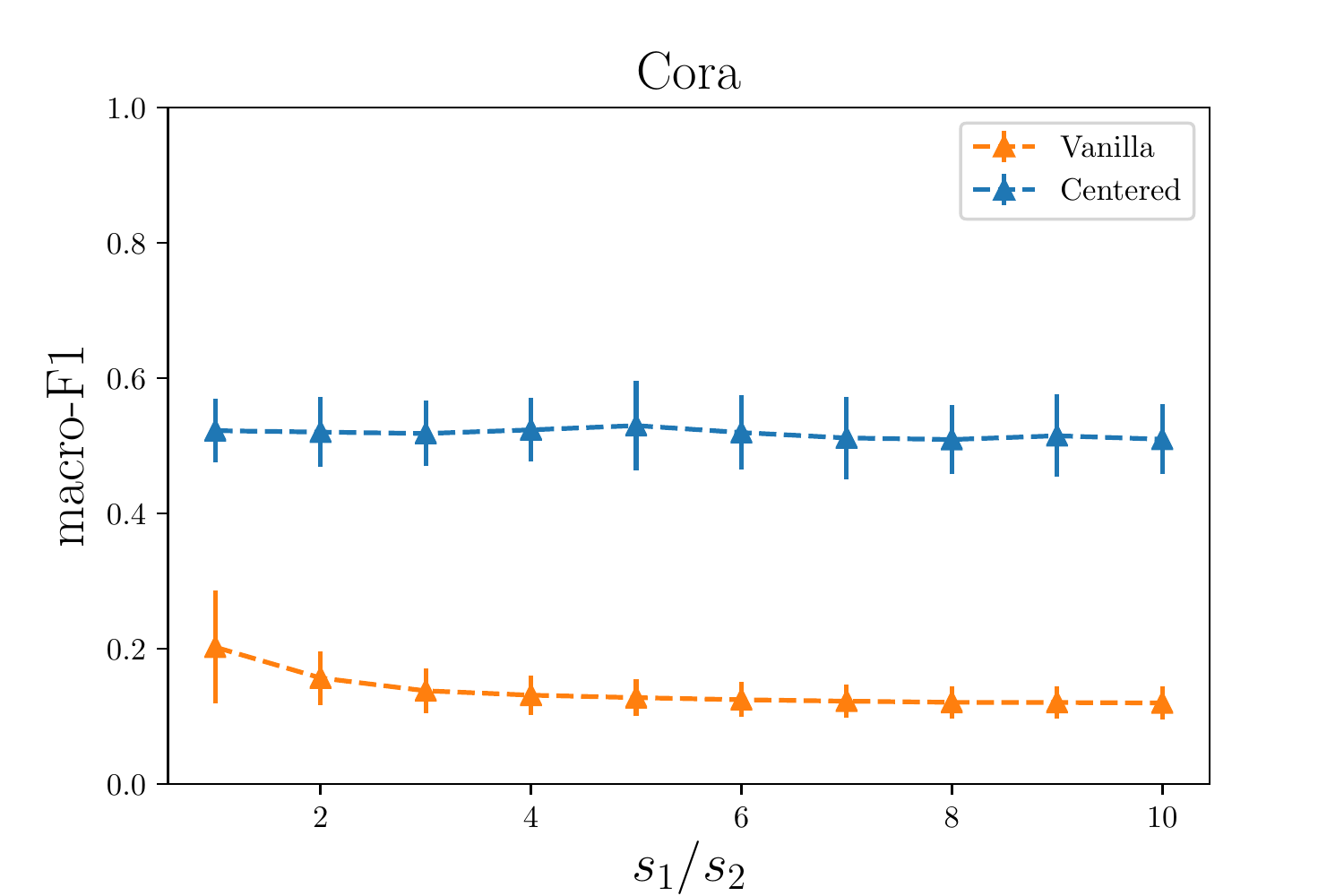}
    \includegraphics[width=0.49\linewidth]{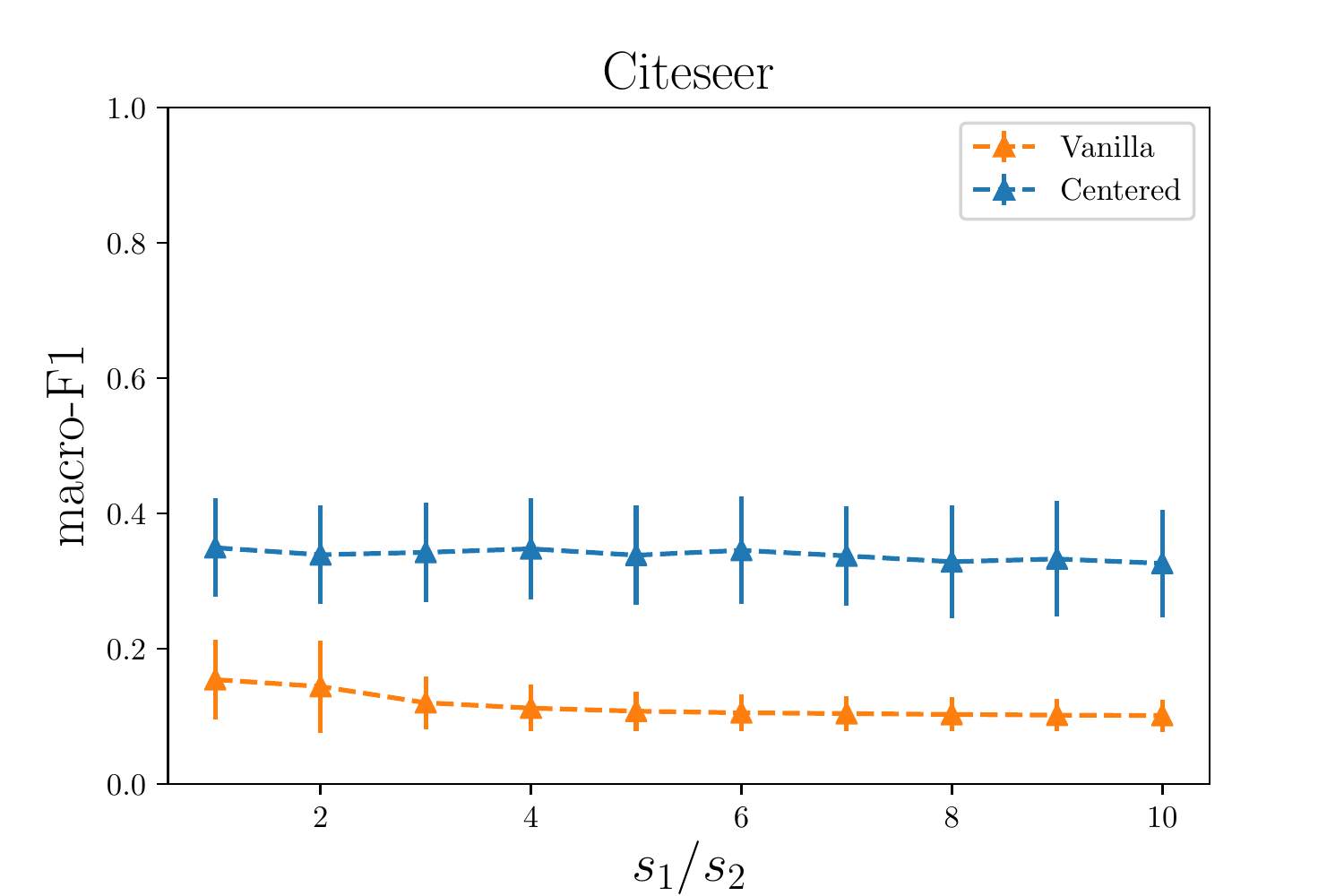}
    \includegraphics[width=0.49\linewidth]{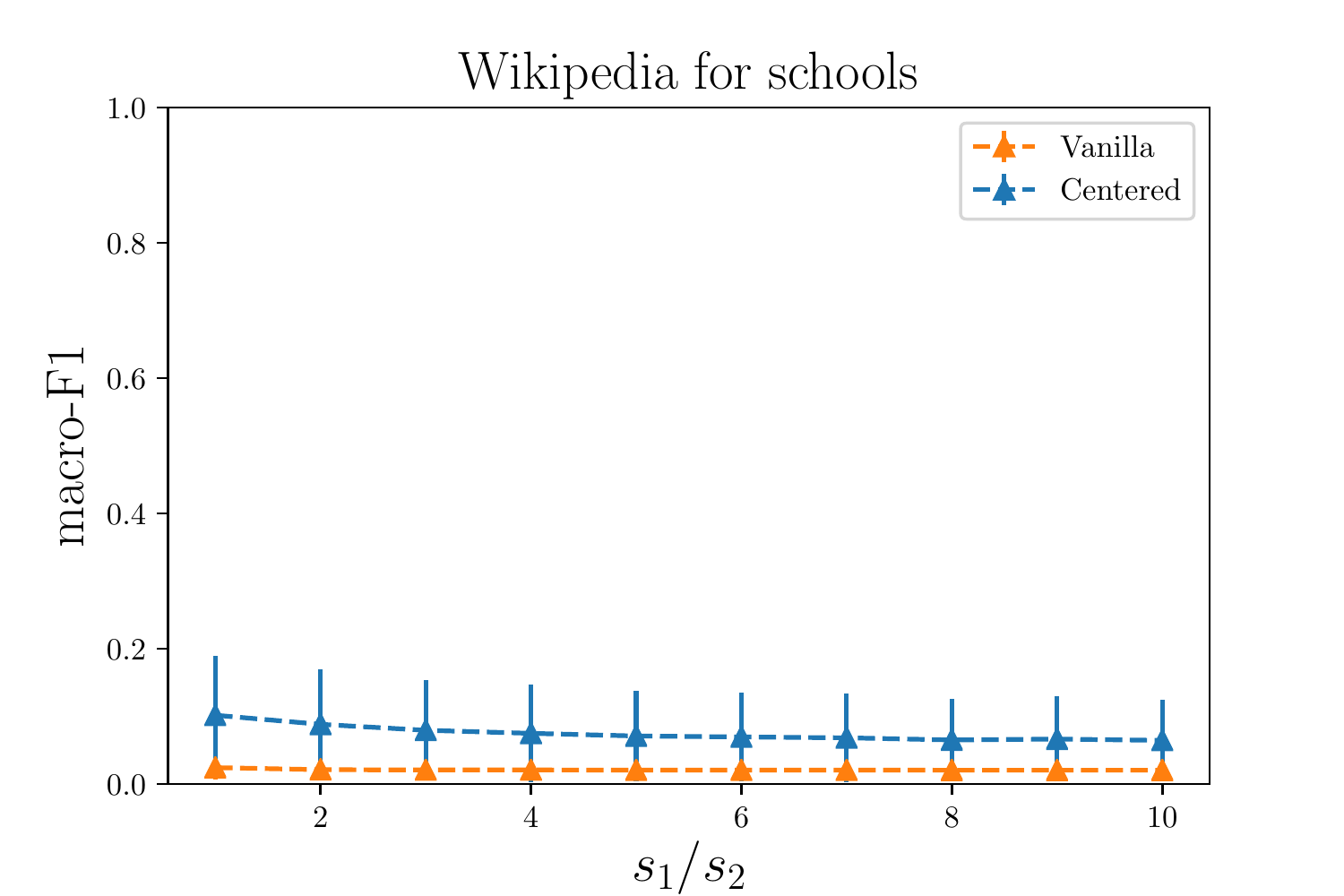}
    \includegraphics[width=0.49\linewidth]{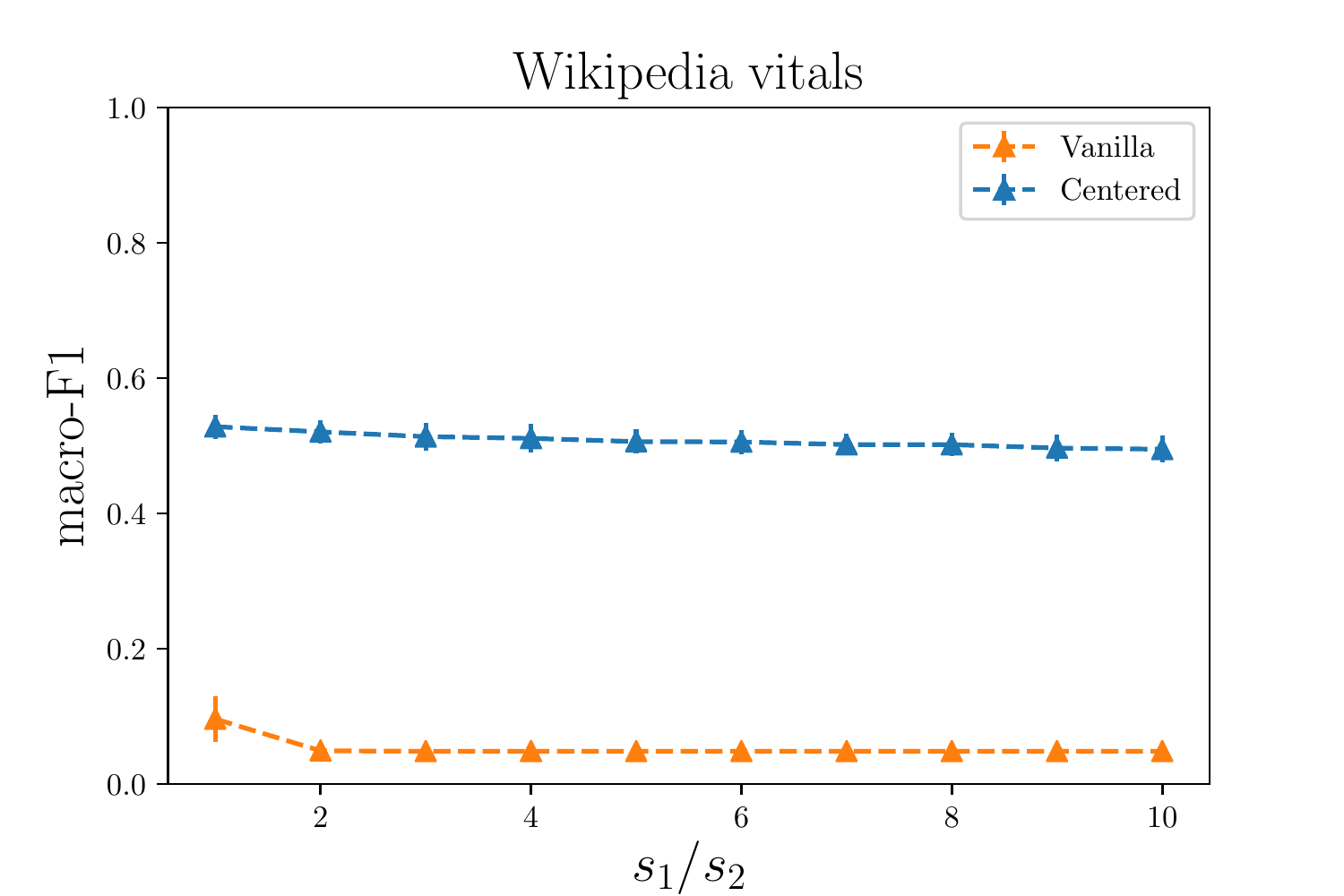}
    \caption{Impact of the fraction of seeds for label 1  on macro-F1 score (mean $\pm$ standard deviation).}
    \label{fig:seed}
\end{figure}

\section{Conclusion}
\label{sec:conc}

We have proposed a novel approach to node classification based on heat diffusion. Specifically, we propose to center the temperatures of each solution to the  Dirichlet problem before classification. We have proved the consistency of this algorithm on a simple block model and we have shown  that it drastically improves classification performance on real datasets with respect to the vanilla version.

In future work, we plan to extend this algorithm to soft classification, using the centered temperatures to get a confidence score for each node of the graph. Another interesting research perspective is to extend our proof of consistency of the algorithm to {\it stochastic} block models.


\bibliographystyle{named}
\bibliography{biblio}

\begin{thebibliography}{}

\bibitem[\protect\citeauthoryear{Airoldi \bgroup \em et al.\egroup
  }{2008}]{airoldi2008mixed}
Edoardo~M Airoldi, David~M Blei, Stephen~E Fienberg, and Eric~P Xing.
\newblock Mixed membership stochastic blockmodels.
\newblock {\em Journal of machine learning research}, 9(Sep):1981--2014, 2008.

\bibitem[\protect\citeauthoryear{Berberidis \bgroup \em et al.\egroup
  }{2018}]{berberidis2018adadif}
Dimitris Berberidis, Athanasios~N Nikolakopoulos, and Georgios~B Giannakis.
\newblock Adadif: Adaptive diffusions for efficient semi-supervised learning
  over graphs.
\newblock In {\em 2018 IEEE International Conference on Big Data (Big Data)},
  pages 92--99. IEEE, 2018.

\bibitem[\protect\citeauthoryear{Chung}{1997}]{chung}
Fan~RK Chung.
\newblock {\em Spectral graph theory}.
\newblock American Mathematical Soc., 1997.

\bibitem[\protect\citeauthoryear{Donnat \bgroup \em et al.\egroup
  }{2018}]{donnat2018learning}
Claire Donnat, Marinka Zitnik, David Hallac, and Jure Leskovec.
\newblock Learning structural node embeddings via diffusion wavelets.
\newblock In {\em Proceedings of the 24th ACM SIGKDD International Conference
  on Knowledge Discovery \& Data Mining}, pages 1320--1329. ACM, 2018.

\bibitem[\protect\citeauthoryear{Fey \bgroup \em et al.\egroup
  }{2018}]{fey2018splinecnn}
Matthias Fey, Jan Eric~Lenssen, Frank Weichert, and Heinrich M{\"u}ller.
\newblock Splinecnn: Fast geometric deep learning with continuous b-spline
  kernels.
\newblock In {\em Proceedings of the IEEE Conference on Computer Vision and
  Pattern Recognition}, pages 869--877, 2018.

\bibitem[\protect\citeauthoryear{Haruechaiyasak and
  Damrongrat}{2008}]{haruechaiyasak2008article}
Choochart Haruechaiyasak and Chaianun Damrongrat.
\newblock Article recommendation based on a topic model for wikipedia selection
  for schools.
\newblock In {\em International Conference on Asian Digital Libraries}, pages
  339--342. Springer, 2008.

\bibitem[\protect\citeauthoryear{Huang \bgroup \em et al.\egroup
  }{2018}]{huang2018adaptive}
Wenbing Huang, Tong Zhang, Yu~Rong, and Junzhou Huang.
\newblock Adaptive sampling towards fast graph representation learning.
\newblock In {\em Advances in Neural Information Processing Systems}, pages
  4558--4567, 2018.

\bibitem[\protect\citeauthoryear{Kondor and
  Lafferty}{2002}]{kondor2002diffusion}
Risi~Imre Kondor and John Lafferty.
\newblock Diffusion kernels on graphs and other discrete structures.
\newblock In {\em Proceedings of the 19th international conference on machine
  learning}, volume 2002, pages 315--322, 2002.

\bibitem[\protect\citeauthoryear{Leskovec and Krevl}{2014}]{snapnets}
Jure Leskovec and Andrej Krevl.
\newblock {SNAP Datasets}: {Stanford} large network dataset collection.
\newblock \url{http://snap.stanford.edu/data}, June 2014.

\bibitem[\protect\citeauthoryear{Li \bgroup \em et al.\egroup
  }{2019}]{DBLP:journals/corr/abs-1902-06105}
Qilin Li, Senjian An, Ling Li, and Wanquan Liu.
\newblock Semi-supervised learning on graph with an alternating diffusion
  process.
\newblock {\em CoRR}, abs/1902.06105, 2019.

\bibitem[\protect\citeauthoryear{Ma \bgroup \em et al.\egroup
  }{2008}]{ma2008mining}
Hao Ma, Haixuan Yang, Michael~R Lyu, and Irwin King.
\newblock Mining social networks using heat diffusion processes for marketing
  candidates selection.
\newblock In {\em Proceedings of the 17th ACM conference on Information and
  knowledge management}, pages 233--242. ACM, 2008.

\bibitem[\protect\citeauthoryear{Ma \bgroup \em et al.\egroup
  }{2011}]{ma2011mining}
Hao Ma, Irwin King, and Michael~R Lyu.
\newblock Mining web graphs for recommendations.
\newblock {\em IEEE Transactions on Knowledge and Data Engineering},
  24(6):1051--1064, 2011.

\bibitem[\protect\citeauthoryear{Merkurjev \bgroup \em et al.\egroup
  }{2016}]{merkurjev2016semi}
Ekatherina Merkurjev, Andrea~L Bertozzi, and Fan Chung.
\newblock A semi-supervised heat kernel pagerank {MBO} algorithm for data
  classification.
\newblock Technical report, University of California, Los Angeles Los Angeles
  United States, 2016.

\bibitem[\protect\citeauthoryear{Thanou \bgroup \em et al.\egroup
  }{2017}]{thanou2017learning}
Dorina Thanou, Xiaowen Dong, Daniel Kressner, and Pascal Frossard.
\newblock Learning heat diffusion graphs.
\newblock {\em IEEE Transactions on Signal and Information Processing over
  Networks}, 3(3):484--499, 2017.

\bibitem[\protect\citeauthoryear{Tremblay and
  Borgnat}{2014}]{tremblay2014graph}
Nicolas Tremblay and Pierre Borgnat.
\newblock Graph wavelets for multiscale community mining.
\newblock {\em IEEE Transactions on Signal Processing}, 62(20):5227--5239,
  2014.

\bibitem[\protect\citeauthoryear{Watts and Strogatz}{1998}]{watts}
Duncan~J Watts and Steven~H Strogatz.
\newblock Collective dynamics of small-world networks.
\newblock {\em Nature}, 1998.

\bibitem[\protect\citeauthoryear{Wijesinghe and
  Wang}{2019}]{wijesinghe2019dfnets}
Asiri Wijesinghe and Qing Wang.
\newblock Dfnets: Spectral cnns for graphs with feedback-looped filters.
\newblock In {\em Advances in Neural Information Processing Systems}, pages
  6007--6018, 2019.

\bibitem[\protect\citeauthoryear{Zachary}{1977}]{zachary1977information}
Wayne~W Zachary.
\newblock An information flow model for conflict and fission in small groups.
\newblock {\em Journal of anthropological research}, 33(4):452--473, 1977.

\bibitem[\protect\citeauthoryear{Zhu \bgroup \em et al.\egroup
  }{2003}]{zhu2003semi}
Xiaojin Zhu, Zoubin Ghahramani, and John~D Lafferty.
\newblock Semi-supervised learning using gaussian fields and harmonic
  functions.
\newblock In {\em Proceedings of the 20th International conference on Machine
  learning (ICML-03)}, pages 912--919, 2003.

\bibitem[\protect\citeauthoryear{Zhu}{2005}]{zhu2005semi}
Xiaojin Zhu.
\newblock {\em Semi-supervised learning with graphs}.
\newblock PhD thesis, Carnegie Mellon University, 2005.

\end{thebibliography}

\appendix

\section*{Appendix: Proof of Lemma \ref{prop:dirichlet}}

\begin{proof}
In view of (2), we have:
\begin{align*}
&(n_1(p-q) + nq) T_1 = s_1 p + (n_1-s_1)pT_1 + \sum_{j\ne 1} (n_j - s_j) qT_j,\\
&(n_k(p-q) + nq) T_k = s_1 q + (n_k-s_k)pT_k + \sum_{j\ne k} (n_j - s_j) qT_j,\quad k=2,\ldots,K.
\end{align*}
We deduce:
\begin{align*}
(s_1(p-q) + nq) T_1  &= s_1 p + Vq,\\
(s_k(p-q) + nq) T_k  &= s_1 q + Vq\quad \quad k=2,\ldots,K.
\end{align*}
with 
$$
V = \sum_{j=1}^K  (n_j - s_j) T_j.
$$
The proof then follows from the fact that 
$$
n \bar T =  s_1 + \sum_{j=1}^K  (n_j - s_j) T_j = s_1 + V.
$$
\end{proof}

\end{document}